    \newtheorem*{rep@theorem}{\rep@title}
    \newcommand{\newreptheorem}[2]{
    \newenvironment{rep#1}[1]{
     \def\rep@title{#2 \ref{##1}}
     \begin{rep@theorem}\itshape}
     {\end{rep@theorem}}}
    \theoremstyle{plain}
    \newcommand{\ignore}[1]{}
    \def\colorful{1}
    \newtheorem*{theorem*}{Theorem}
    \newtheorem*{noclaim*}{Claim}
    \renewcommand{\R}{\mathds{R}} 
    \newcommand{\cD}{{\cal D}}
    \newcommand{\cF}{{\cal F}}
    \newcommand{\Flin}{\cF_{\mathrm{lin}}}
    \newcommand{\ocD}{\overline{{\cal D}}}
    \newcommand{\ty}{\tilde{y}}
\begin{document}

    \title{\bf The perils of being unhinged: \\
           On the accuracy of classifiers minimizing \\
           a noise-robust convex loss}
    
    \author{Philip M. Long \\
            Google \\
            plong@google.com \\
            \and
            Rocco A. Servedio \\
            Columbia University \\
            rocco@cs.columbia.edu \\
            }
    
    \date{}
        \maketitle
    
    



%

    \begin{abstract}
        \citet{VRMW15}~introduced a notion of  convex loss functions being robust to random classification noise, and established that the ``unhinged'' loss function is robust in this sense. In this note we study the accuracy of binary classifiers obtained by minimizing the unhinged loss, and observe that even for simple linearly separable data distributions, minimizing the unhinged loss may only yield a binary classifier with 
        accuracy no better than random guessing.
    \end{abstract}
    
    \sloppy
    \section{Introduction} \label{sec:intro}
    
    As van Rooyen et al.\ 
    noted
    in the first sentence of the abstract of \citet{VRMW15}, 
    ``Convex potential minimisation is the \emph{de facto} approach to binary classification.'' Given the ubiquity of this approach, it is natural to study its abilities and limitations in the presence of noise, and indeed this is the subject of many works 
    \citep[see][]{zhang2004statistical,bartlett2006convexity,LS10,ManwaniSastry13,NDRT13,VRMW15,ghosh2017robust}).
    
    The aim of this note is to clarify the connection between minimizing a convex potential function which is ``robust to classification noise'' in the sense of \citet{VRMW15}, and learning (i.e.~performing accurate classification).
    
    \medskip
    
    \noindent {\bf Background.}
    Motivated by the observation that the popular AdaBoost algorithm (which works by minimizing the (convex) exponential potential function) can have empirically poor classification accuracy when run on noisy data \citep{Dietterich:00,FreundSchapire:96b,MaclinOptiz:97},  \citet{LS10} studied the performance of classification algorithms which work by minimizing a convex potential function in settings where linearly separable data is contaminated with random classification noise (RCN). The main result of \citet{LS10} is a proof that for a certain simple learning problem corresponding to a ``clean'' data distribution ${\cal D}_1$ that is linearly separable with a margin, for \emph{any} ``convex potential function'' $\phi$, minimizing $\phi$ over all linear combinations of base features in the presence of random classification noise only yields a binary classifier with an error rate of 1/2 under the clean distribution ${\cal D}_1$.  (Here a ``convex potential function'' is a convex function $\phi: \R \to \R$ satisfying certain mild conditions which we detail in \Cref{def:cpf}.) This is in sharp contrast with the fact that, 
    in the noise-free setting of a 
    data distribution that is linearly separable with a margin,
    driving the potential to zero leads to a 
    perfectly accurate binary classifier.
    
    In an effort to address the discouraging negative result of \citet{LS10}, \citet{VRMW15} considered a weakening of the \citep{LS10} conditions for a convex potential function. In particular, they allow such functions $\phi$ to take 
    negative values 
    (which is disallowed by the definition of Long and Servedio). We refer to a function satisfying the condition of \citet{VRMW15} as a ``relaxed convex potential function.''  
    
 The main result of \citet{VRMW15} is that they
    propose a certain relaxed convex potential function, which we denote $\phi^\star$, and prove that it is ``RCN-robust''.\footnote{\citet{VRMW15} uses the term ``SLN-robust'', where the acronym stands for Symmetric Label Noise.}  We give a formal definition of RCN-robustness in \Cref{sec:preliminaries}, but intuitively it means that a minimizer of this potential function (minimizing over all linear combinations of base features) under random classification noise performs no worse than a minimizer obtained with no random classification noise.  \citet{VRMW15} also define a notion of ``strong RCN-robustness'' and show that their $\phi^\star$ is the unique relaxed convex potential function which satisfies strong RCN-robustness.
    
    \medskip
    \noindent {\bf This note.} The purpose of the present note is to discuss the \emph{accuracy} of the classifier obtained by minimizing the relaxed convex potential function $\phi^\star$ of \citet{VRMW15}.   
    Our main observation is that, for a simple learning problem corresponding to a certain ``clean'' data distribution ${\cal D}_2$ that is linearly separable with a margin, minimizing $\phi^\star$ over all bounded-norm linear combinations of base features \emph{even when there is no random classification noise} only yields a binary classifier with an error rate of 1/2.  Since, as shown by \citet{VRMW15}, $\phi^\star$ is the unique strong RCN-robust relaxed convex potential function, this means that minimizing any strong RCN-robust relaxed convex potential function in this noise-free scenario may only yield a binary classifier with an error rate of 1/2,
    which can be obtained through random guessing.
    
    Our observation is consistent with the result of \citet{VRMW15}  that $\phi^\star$ is RCN-robust, since, 
    informally,
    that condition only states that ``you don't do any worse when there is RCN than when there is no RCN.''  Our example demonstrates even when there is no 
    noise,
    the accuracy of the binary classifier obtained by minimizing $\phi^\star$ may be only 1/2, and of course the accuracy is no worse than this when there actually is noise.
    
    \section{Preliminaries} \label{sec:preliminaries}
    
    \subsection{Background: the negative result of \citet{LS10}  for convex potential functions}
    
    \noindent {\bf Convex potential functions.}
    We recall the following definition which is central to the work of \citet{LS10}:
    
    \begin{definition} [\citep{LS10}, Definition~1] \label{def:cpf}
    A function $\phi: \R \to \R$ is a \emph{convex potential function} if it satisfies the following:
    \begin{enumerate}
    \item $\phi \in C^1$ (i.e.~$\phi$ is differentiable and $\phi'$ is continuous) and $\phi$ is convex and nonincreasing; and
    \item $\phi'(0)<0$ and $\lim_{x \to \infty}\phi(x)=0$ (hence $\phi$ is everywhere non-negative).
    \end{enumerate}
    \end{definition}
    

  A number of potential functions used in the literature fit this definition, including the exponential potential function used by AdaBoost \citep{FreundSchapire:97}, the mixed linear/exponential potential function used by MadaBoost \citep{DomingoWatanabe:00}, and the logistic function  used by LogitBoost \citep{FHT:98}; see Table~1.
  
  \begin{table} \label{table:potentials}
  \begin{tabular}{ |p{4.7cm}||p{4.9cm}|p{3.9cm}|  }
 \hline
Potential function & Reference &Satisfies \Cref{def:cpf}?\\
 \hline
 Exponential: & & \\
 $\phi(z) = e^{-z}$   & \citep{FreundSchapire:97}    &Yes\\
 \hline
 Mixed linear/exponential: & & \\
 $\phi(z) = 
 \begin{cases}
1-z & \text{~if~}z \leq 0\\
 e^{-z} & \text{~if~}z>0\\
 \end{cases}$&   \citep{DomingoWatanabe:00}  & Yes \\
 \hline
 Logistic: & & \\
 $\phi(z) = \ln(1+e^{-2z})$ &\citep{FHT:98} & Yes\\
 \hline
 Hinge: & &  \\
 $\phi(z)= \max\{0,1-z\}$    &
 \citep{gentile1998linear}
 & No\\
 \hline
 Unhinged: & & \\
 $\phi(z)=1-z$ & \citep{VRMW15} & No\\
 \hline
\end{tabular}
\caption{Some commonly used potential functions.}
\label{t:loss}
\end{table}

    \medskip
    \noindent {\bf Linearly separable learning problems.}
    One of the simplest models for binary-labeled data over  $\R^d$ is that of data which is \emph{linearly separable with a margin}. 
    A ``clean'' probability distribution ${\cal D}$ over $\R^d \times \{-1,1\}$ is linearly separable with margin $\gamma>0$ if  there is a target weight vector $w=(w_1,\dots,w_d) \in \R^d$ such that 
    \[
    \Prx_{(\bx,\by) \sim {\cal D}}\left[{\frac {\by (w \cdot \bx)}{|w_1| + \cdots + |w_d|}} < \gamma \right]=0.
    \]
    A very standard learning approach for such a setting is to choose a hypothesis weight vector $v=(v_1,\dots,v_d) \in \R^d$ with the aim of minimizing the ``global'' potential function 
    \begin{equation} \label{eq:gpf}
    P_{\phi, \cD}(v) := \Ex_{(\bx,\by) \sim {\cal D}}\left[\phi(\by(v\cdot \bx))\right].
    \end{equation}
    (Of course, given a finite sample of draws from ${\cal D}$, this is typically done by minimizing the corresponding expectation over the sample.)  It is well known that if ${\cal D}$ is linearly separable with margin $\gamma>0$, then for 
    a range of different choices of the convex potential function $\phi$ (including the AdaBoost, MadaBoost and LogitBoost potential functions described above),
    greedy iterative algorithms that perform coordinatewise gradient descent to minimize $P_{\phi,\cD}$ 
    will drive the misclassification error 
    $\Pr_{(\bx,\by) \sim {\cal D}}[\by \neq \sign(v \cdot \bx)]$ 
    to zero.
    Indeed, the AdaBoost \citep{FreundSchapire:97}, MadaBoost \citep{DomingoWatanabe:00} and LogitBoost \citep{FHT:98}
    boosting algorithms correspond precisely to greedy coordinatewise gradient descent procedures of this sort; see the work of \citet{MBB+:99} for details.
    
    \medskip
    \noindent {\bf Learning problems with random classification noise.}
    Let ${\cal D}$ be a\ignore{linearly separable} data distribution over $\R^d \times \bits$ as described above.  The \emph{$\eta$-RCN corrupted} version of ${\cal D}$ is the following distribution $\overline{{\cal D}}_\eta$ over $\R^d \times \bits$: a draw from $\overline{{\cal D}}_\eta$ is obtained by drawing $(\bx,\by) \sim {\cal D}$ and flipping the label $\by$ with probability $\eta$.
    
    \medskip
    \noindent {\bf The negative result of \citet{LS10}.}
    The main result of \citet{LS10} is that there is \emph{no} convex potential function such that minimizing $\phi$ on the $\eta$-RCN corrupted distribution $\overline{\cal D}_\eta$ will succeed in achieving nontrivial classification accuracy:
    
    \begin{theorem} \label{thm:LS08}
    Fix any noise rate $0 < \eta < 1/2$ and any convex potential function $\phi$. There is a distribution ${\cal D}$ over $\R^2 \times \{-1,1\}$ (in fact the distribution ${\cal D}$ is supported on three points in the unit disc) and a margin parameter $\gamma > 0$ such that (a) ${\cal D}$ is linearly separable with margin $\gamma$,
    but (b) any weight vector 
    $v$ 
    which minimizes $\Ex_{(\bx,\by) \sim \overline{{\cal D}}_\eta}\left[\phi(\by(v\cdot \bx))\right]$ has $\Pr_{(\bx,\by) \sim {\cal D}}[\by \neq \sign(v \cdot \bx)]=1/2.$
    \end{theorem}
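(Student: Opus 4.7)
The plan is to prove Theorem \ref{thm:LS08} by exhibiting a three-point distribution $\cD$ in $\R^2$ (tailored to $\phi$ and $\eta$) for which the minimizer of the convex noisy potential misclassifies a set of mass exactly $1/2$.

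Define the noise-symmetrized scalar potential $\psi(z) := (1-\eta)\phi(z) + \eta\phi(-z)$. Convexity of $\phi$ gives convexity of $\psi$, and $\psi'(0) = (1-2\eta)\phi'(0) < 0$, so $\psi$ has a unique scalar minimizer $z^\star = z^\star(\phi,\eta) > 0$ determined by $(1-\eta)\phi'(z^\star) = \eta\phi'(-z^\star)$. For a three-point distribution with atoms $(p_i,y_i)$ and masses $q_i$, the noisy potential factors as
\[
\Ex_{(\bx,\by)\sim \ocD_\eta}\bigl[\phi(\by(v \cdot \bx))\bigr] \;=\; \sum_{i=1}^3 q_i\,\psi\bigl(y_i(v \cdot p_i)\bigr),
\]
which is convex in $v$, with first-order stationarity $\sum_i q_i\,y_i\,\psi'\bigl(y_i(v^* \cdot p_i)\bigr)\,p_i = 0$.

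The construction proceeds in reverse. I would specify in advance a target minimizer $v^*$ and a desired classification pattern --- namely that $\sign(v^* \cdot p_i) \neq y_i$ for the single point carrying mass exactly $1/2$, and $\sign(v^* \cdot p_j) = y_j$ for the other two --- and then back-solve for the three point coordinates, labels, and masses so that (i) the two scalar components of the stationarity equation hold at $v^*$; (ii) $\sum_i q_i = 1$ with the mass of the misclassified point equal to $1/2$; and (iii) the clean distribution $\cD$ admits a linear separator $w$ with $\ell_1$-normalized margin at least $\gamma$. The available degrees of freedom --- six coordinates, three labels, three masses --- comfortably exceed the number of equality constraints, so a solution exists. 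Intuitively, the ``small-margin puller'' point is placed so that, even though its clean-label gradient pulls $v$ in the correct direction, the flipped-label contribution $-\eta\phi'(-y(v \cdot p))$ from a ``large-margin'' point grows rapidly (since $\phi$ is convex, nonincreasing, and $\phi(-\infty)$ is typically large) and drags $v^*$ to a region where the puller is misclassified.

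The main obstacle is that the analysis must work uniformly for every convex potential function $\phi$ satisfying \Cref{def:cpf}, yet the precise shape of $\psi$ and the location of $z^\star$ depend on $\phi$ in a way one cannot predict in advance. Since Theorem \ref{thm:LS08} allows $\cD$ to depend on $\phi$, the resolution is to customize the construction: compute $z^\star$ from $\phi$ and $\eta$, and then select the construction parameters so that the misclassified point's scalar argument $y_i(v^* \cdot p_i)$ is driven to a value where the noise-induced term in $\psi'$ dominates. A continuity/intermediate-value argument in the construction parameters (sweeping, say, the $x$-coordinate of a single point through a range that causes the clean error to change between $0$ and a value larger than $1/2$) produces a configuration with clean error exactly $1/2$. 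A final rescaling places the three atoms inside the unit disc while preserving the margin $\gamma$.
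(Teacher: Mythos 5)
Your setup is fine (the noise-symmetrized loss $\psi(z) = (1-\eta)\phi(z) + \eta\phi(-z)$, its convexity, $\psi'(0) = (1-2\eta)\phi'(0) < 0$, and the stationarity condition $\sum_i q_i y_i \psi'(y_i(v^* \cdot p_i))\,p_i = 0$), but the heart of the proof is missing. Note first that the paper does not prove this theorem itself: it is quoted from \citet{LS10}, and the paper only supplies Appendix~\ref{a:minimum} showing the minimizer exists; the standard of comparison is therefore the explicit construction of \citet{LS10}, which fixes a concrete three-point, all-positive-label configuration (a weight-$1/2$ small-margin ``penalizer'' plus two weight-$1/4$ points, essentially the configuration reused in \Cref{sec:example}) and verifies directly from the gradient conditions that any minimizer misclassifies the penalizer. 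Your plan replaces that verification with a degrees-of-freedom count (``six coordinates, three labels, three masses comfortably exceed the number of equality constraints, so a solution exists''). That is not a valid existence argument: the constraints are nonlinear, coupled through $\psi'$ whose shape depends on the arbitrary $\phi$, and they include inequality constraints (the assumed sign pattern at $v^*$, nonnegativity of the back-solved masses, and linear separability of the clean data with a margin). Nothing in the count rules out that the feasible set is empty for some admissible $\phi$. Indeed, your driving intuition --- that the flipped-label contribution ``grows rapidly'' because $\phi(-\infty)$ is large --- is not uniform over \Cref{def:cpf}: the mixed linear/exponential (MadaBoost) potential has slope bounded by $1$ in magnitude everywhere, so no term ever ``dominates'' in the way you describe; handling such potentials is exactly what makes the \citet{LS10} analysis delicate.

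There are two further gaps. First, the theorem quantifies over \emph{every} minimizer (``any weight vector $v$ which minimizes\dots''), whereas reverse-engineering stationarity produces a single global minimizer $v^*$; since $\psi$ need not be strictly convex, the minimizing set can be larger than a point (your claim that $z^\star$ is unique is already unjustified for the same reason), and you give no argument that all minimizers share the sign pattern that yields error $1/2$. Second, the intermediate-value step is invalid: for a three-atom distribution with masses $1/4$, $1/4$, $1/2$, the clean error of the minimizer, viewed as a function of your sweep parameter, takes values only in the finite set $\{0, 1/4, 1/2, 3/4, 1\}$; it is a discontinuous step function, so continuity arguments cannot produce ``clean error exactly $1/2$.'' What is needed instead is a direct proof that at every minimizer the weight-$1/2$ point is misclassified and the other two are not --- which is precisely the explicit computation your plan defers.
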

    (See Appendix~\ref{a:minimum} for a proof that for any convex potential function $\phi$, the minimizer analyzed in
    Theorem~\ref{thm:LS08} exists.)
    
    \subsection{Relaxed convex potential functions: a new hope?}
    
    Motivated by the goal of circumventing the negative result of \Cref{thm:LS08}, \citet{VRMW15} consider a relaxed form of \Cref{def:cpf}:
    
    \begin{definition} \label{def:rcpf}
    A function $\phi: \R \to \R$ is a \emph{relaxed} convex potential function if it satisfies the following:
    \begin{enumerate}
    \item $\phi \in C^1$ (i.e.~$\phi$ is differentiable and $\phi'$ is continuous) and $\phi$ is convex and nonincreasing; and
    \item $\phi'(0)<0$.
    \end{enumerate}
    \end{definition}
    The only difference between \Cref{def:cpf} and \Cref{def:rcpf} is that the latter does not require $\lim_{x \to \infty} \phi(x) = 0$; a relaxed convex loss function may take (arbitrarily large magnitude) negative values.  \citet{VRMW15} exploit this flexibility by proposing the following simple potential function, which they call the ``unhinged loss'':
    \[
    \phi^\ast(z) = 1 - z.
    \]
    It is trivial to verify that $\phi^\star$ satisfies \Cref{def:rcpf} and hence is a valid relaxed convex potential function. 
    (Note, also, that if the simpler
    $\phi(z) = -z$ is used instead, all gradients and minima are 
    unaffected.)
    We note that the unhinged loss is a member of the class of \emph{symmetric} loss functions, which satisfy $\phi(z) + \phi(-z) = $constant; such loss functions have been studied by a number of authors, see e.g. \citet{CLS19,GMS15}.

    \medskip
    \noindent {\bf RCN-robustness.}
    \citet{VRMW15} analyze the relaxed convex potential function $\phi^\star$ through the lens of a new notion which 
    we will
    call \emph{RCN-robustness}.
    Their definition (Definition~1 of \citet{VRMW15}) applies to a general pair $(\ell,{\cal F})$ where $\ell$ is a  loss function and ${\cal F}$ is a class which may consist of any collection of functions mapping a domain $X$ to $\R$. 
    
    Informally, a pair $(\phi,{\cal F})$ is RCN-robust if minimizing $\phi$ over ${\cal F}$  on noise-free data gives the same binary classification performance as minimizing $\phi$ over ${\cal F}$ on RCN-contaminated data.  More precisely, we have the following:
    
    \begin{definition} \label{def:rcn-robust}
    Let $\cF$ be a set of real-valued functions over $\R^d$ and let $\phi$ be a potential function. The pair $(\phi,\cF)$ is  
    is said to be \emph{RCN-robust} if the following holds: Let ${\cal D}$ be any distribution over $\R^d \times \bits$ and let $0<\eta<1/2$ be any noise rate.
    If $f$ is a minimizer of 
    $ \Ex_{(\bx,\by) \sim {\cal D}}\left[\phi(\by(f(\bx)))\right]$ over $f \in \cF$
    and  $g$ is the minimizer of $ \Ex_{(\bx,\by) \sim \overline{\cal D}_\eta}\left[\phi(\by(g( \bx)))\right]$ over $g \in \cF$, then 
    \begin{equation} \label{eq:robust}
    \Prx_{(\bx,\by) \sim {\cal D}}[\by \neq \sign(f(\bx))] = 
    \Prx_{(\bx,\by) \sim {\cal D}}[\by \neq \sign(g(\bx))].
    \end{equation}
    \end{definition}
    
    \citet{VRMW15} specialize \Cref{def:rcn-robust} to the function class $\Flin$ of all linear functions 
    $x \mapsto v \cdot x$ 
    from $\R^d \to \R$ (see Section~3.2 of their paper). However, a problem with \Cref{def:rcn-robust} for this function class is that $ \Ex_{(\bx,\by) \sim {\cal D}}\left[\phi(\by(v\cdot \bx))\right]$ may not
    have a minimum.  This is not merely a technicality.
    In fact, for standard loss functions
    such as the logistic loss or the exponential loss,
    for any linearly separable distribution $\cal D$,
    $ \Ex_{(\bx,\by) \sim {\cal D}}\left[\phi(\by(v\cdot \bx))\right]$
    does not have a minimum, informally, because scaling up
    $v$ increases all of the margins, which decreases all of the
    losses.\footnote{Implicit bias research analyzes the effect
    of the algorithm that drives $ \Ex_{(\bx,\by) \sim {\cal D}}\left[\phi(\by(v\cdot \bx))\right]$ to zero on the classification
    behavior of the limiting classifier. 
  Different algorithms lead
    to markedly different limiting classifiers \citep{DBLP:conf/icml/Telgarsky13,soudry2018implicit,ji2019implicit} .
    }  \citet{VRMW15} interpret \Cref{thm:LS08} as saying that for $d \geq 2$, the pair  $(\phi,\Flin)$ cannot be RCN-robust for any convex potential function $\phi$ (see Proposition~1 of Section~3.2 of their paper), but the fact that the minimizer typically
    doesn't exist in the absence of noise interferes with this interpretation.
    The unhinged loss also 
    cannot be minimized over $\Flin$, since
    by scaling up the weight vector of any linear separator, the unhinged loss can
    achieve an arbitrarily large negative value.  
    
    \citet{VRMW15} also consider the class $\cF_{\mathrm{lin},r}$
    of all linear functions whose weight vector has length at most $r$.
    They prove the following:
    
     
    \begin{theorem} [\citep{VRMW15}, Section~5.1] \label{thm:rcn-robust}
    For all $d$ and all $r > 0$,
    $(\phi^\star, \cF_{\mathrm{lin},r})$ is RCN-robust.
    \end{theorem}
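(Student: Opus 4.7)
The plan is to exploit the fact that the unhinged loss is linear in $v$, so the expected loss reduces to an inner product against a single mean vector, and adding random classification noise only scales this vector by the positive factor $1-2\eta$.

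First I would compute, for any distribution ${\cal D}$ over $\R^d \times \bits$ and any $v \in \R^d$,
\[
\Ex_{(\bx,\by) \sim {\cal D}}\bigl[\phi^\star(\by(v\cdot \bx))\bigr] = 1 - v \cdot \mu_{\cal D}, \qquad \text{where } \mu_{\cal D} := \Ex_{(\bx,\by)\sim{\cal D}}[\by\bx],
\]
using linearity of expectation. Then I would observe that under the RCN corruption $\overline{\cal D}_\eta$, since the label is independently flipped with probability $\eta$, we have $\mu_{\overline{\cal D}_\eta} = (1-2\eta)\,\mu_{\cal D}$.

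Next, I would minimize $1 - v \cdot \mu$ over $\cF_{\mathrm{lin},r} = \{x \mapsto v\cdot x : \|v\| \le r\}$. When $\mu \ne 0$, the Cauchy--Schwarz inequality gives the unique minimizer $v^\star = r\mu/\|\mu\|$. Applied with $\mu = \mu_{\cal D}$ and $\mu = \mu_{\overline{\cal D}_\eta} = (1-2\eta)\mu_{\cal D}$, since $1-2\eta>0$, both minimizers point in the same direction, so $f(x) = g(x)\cdot \text{(positive scalar)}$ after normalization, and in particular $\sign(f(x)) = \sign(g(x))$ for every $x$, which immediately yields \Cref{eq:robust}.

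The only subtlety is the degenerate case $\mu_{\cal D} = 0$: then the expected loss is the constant $1$ for every $v$, so every element of $\cF_{\mathrm{lin},r}$ is a minimizer, and the same is true for $\overline{\cal D}_\eta$ since $\mu_{\overline{\cal D}_\eta} = 0$ as well. I would address this either by noting that the two minimizer sets coincide (so a common choice of minimizer makes both sides of \Cref{eq:robust} equal), or by observing that \Cref{def:rcn-robust} is naturally read as requiring \Cref{eq:robust} to hold for compatible choices of minimizers when minimizers are not unique. This edge case is the only nontrivial obstacle; the main argument is essentially a one-line consequence of linearity of $\phi^\star$ and the symmetry of RCN.
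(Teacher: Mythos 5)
Your proposal is correct and matches the reasoning the paper itself gives for this result: the paper's observation (\ref{eq:centroid}) that the constrained minimizer of the unhinged loss is the rescaled centroid $\alpha \Ex_{(\bx,\by)\sim{\cal D}}[\by\bx]$, together with its remark that the clean and noisy objectives are linearly related with positive slope $1-2\eta$, is exactly your argument that $\mu_{\overline{\cal D}_\eta} = (1-2\eta)\mu_{\cal D}$ forces the two minimizers to coincide. Your explicit Cauchy--Schwarz computation and your handling of the degenerate case $\mu_{\cal D}=0$ (which the paper leaves implicit) are fine additional details, but the route is essentially the same.
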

    
    \citet{VRMW15} further establish a number of additional properties about the unhinged loss $\phi^\star$; most of these will not concern us, but one simple property, which we now explain, is relevant to our discussion in \Cref{sec:example}.
    As above let $v \in \R^d$ be the minimizer of  $\Ex_{(\bx,\by) \sim {\cal D}}\left[\phi(\by(v\cdot \bx))\right]$ subject to $|| v || \leq r$ and let
    $v' \in \R^d$ be the minimizer of  $\Ex_{(\bx,\by) \sim \overline{\cal D}_{\eta}}\left[\phi(\by(v'\cdot \bx))\right]$ subject to $|| v' || \leq r$.
    \citet{VRMW15} make the straightforward but useful observation that $v$ is the vector corresponding to  a ``nearest centroid classifier'' (see
    \citet{DBLP:journals/siamcomp/Servedio02}, \citep{Tibshirani02}, p. 181 of \citep{Manning08}, and Section~5.1 of \citet{STC04}), i.e.~we have
    \begin{equation} \label{eq:centroid}
    v =  \alpha \Ex_{(\bx,\by) \sim {\cal D}}[\by \bx]
    \end{equation}
    for a suitable rescaling factor $\alpha$,
    and furthermore that $v = v'$ (this holds since the values of 
    $\Ex_{(\bx,\by) \sim {\cal D}}\left[\phi^\star(\by(w\cdot \bx))\right]$
    and
    $\Ex_{(\bx,\by) \sim \overline{{\cal D}}_\eta}\left[\phi^\star(\by(w\cdot \bx))\right]$
    are linearly related with a slope of $1-2\eta>0$).

    
    

    \section{A separable learning problem where minimizing the unhinged loss on clean data yields a poor classifier}
    \label{sec:example}
    
    \begin{figure}
    \hskip 2in
    \begin{tikzpicture}
    \draw[dotted] (0,0)--(6,0);
    \draw[dotted] (1,-1)--(1,5);
    
    \draw[->] (1,0)--(1.31,0.1691);
    \node[] at (1.4,0.4091) {$v$};

    \draw (0.1545,1.3)--(1.8455,-1.3);
    
    \draw[fill] (5,0) circle [radius =0.04cm];
    \node[] at (5.0,0.3) {(1,0)};
    
    \draw[fill] (1.32,3.98) circle [radius =0.04cm];
    \node[] at (1.32,4.28) {$(\gamma,\sqrt{1-\gamma^2})$};
    
    \draw (1.32,-0.64) circle [radius =0.07cm];
    \node[] at (0.62,-1) {$(\gamma,-2\gamma)$};

    \end{tikzpicture}
    \caption{The distribution ${\cal D}$ over $\R^2 \times \bits$. All examples have label $+1$; the two examples with weight $1/4$ are depicted with small filled circles and the example with weight $1/2$ is depicted with a larger unfilled circle.}
    \end{figure}
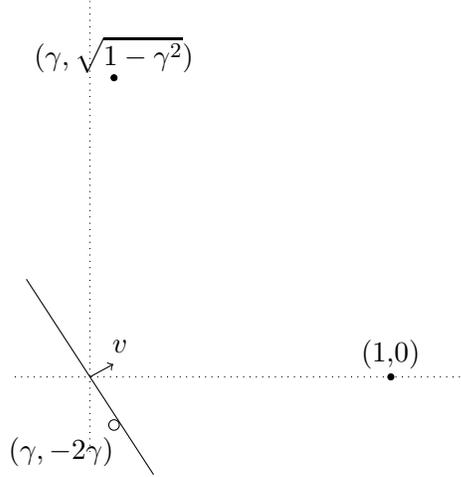

    In this section we observe that while the unhinged loss $\phi^\star$ is strongly robust, there are simple linearly separable data distributions for which minimizing $\phi^\star$ over all functions in $\cF_{\mathrm{lin},r}$ \emph{even in the absence of random classification noise} only yields a binary classifier with an error rate of 1/2. So while (\ref{eq:robust}) is satisfied, it holds because both error rates are equal to 1/2.
    
    We illustrate this with the linearly separable learning scenario which is depicted in Figure~1. The distribution ${\cal D}$ over $\R^2 \times \bits$ is as follows: given a parameter $0 < \gamma < 0.0901$,
    
    \begin{itemize}
    
    \item ${\cal D}$ puts weight $1/4$ on the labeled example $x^{(1)}:=(1,0),y^{(1)} :=1$;
    \item ${\cal D}$ puts weight $1/4$ on the labeled example $x^{(2)}:=(\gamma,\sqrt{1-\gamma^2}),y^{(2)} :=1$;
    \item ${\cal D}$ puts weight $1/2$ on the labeled example $x^{(3)}:=(\gamma,-2\gamma),y^{(3)} :=1$.
    
    \end{itemize}
    It is clear that ${\cal D}$ is linearly separable with margin $\gamma$.
    By (\ref{eq:centroid}), the vector in $\R^2$ which minimizes $\Ex_{(\bx,\by) \sim {\cal D}}\left[\phi(\by(v\cdot \bx))\right]$
    points in the direction of 
    \[
    v=(v_1,v_2) = \left({\frac 1 4} + {\frac {3 \gamma} 4}, {\frac{\sqrt{1-\gamma^2}} 4} - \gamma \right).
    \]
    For $0 < \gamma < 0.0901$ we have $v \cdot x^{(3)} < 0$ and hence $\sign(v \cdot x^{(3)}) \neq y^{(3)}$, so the LHS of (\ref{eq:robust}) is $1/2$. Since $v'=v$ the RHS of (\ref{eq:robust}) is also $1/2.$
    
    \ignore{
    
    
    }
    
    \section{Implicit bias}
    
    This section includes a couple of observations
    about the implicit bias of algorithms that
    iteratively reduce the unhinged loss.
    Analogous results have been
    obtained for other loss functions
    \citep{DBLP:conf/icml/Telgarsky13,soudry2018implicit,ji2019implicit}.
    
    \subsection{Gradient descent}
    
    Recall that the unhinged loss function is defined to be $\phi^\ast(z) = 1 - z.$
    If $\cD$ is uniform over
    $(x_1,y_1),...,(x_n,y_n) \in \R^d \times \{ -1, 1 \}$, then
    for any $v \in \R^d$ the gradient of the unhinged loss at
    $v$ is $-\sum_{i=1}^n y_i x_i$ (note that this does not depend on $v$). Thus, if the unhinged loss is minimized
    by gradient descent starting with an initial solution of
    $0$, all iterates are multiples of 
    $\sum_{i=1}^n y_i x_i$.  If the initial solution is
    $v_0$, then, after $T$ updates with step size
    $\eta$, the weight vector is $v_0 + \eta T \sum_{i=1}^n y_i x_i$.
    As $T$ goes to infinity, the angle between this weight
    vector and $\sum_{i=1}^n y_i x_i$ goes to zero.
    
    \subsection{Coordinate descent}
    
    As mentioned earlier, popular boosting
    algorithms can be viewed as coordinate descent
    on a convex potential function, which works by repeatedly
    finding the coordinate axis with
    the steepest descent direction  and
    making an update in that direction.
    Informally, the
    unhinged loss rewards increasing the margin
    $y v \cdot x$ of a correctly classified example
    $(x,y)$
    as much as increasing the negative margin of an incorrectly 
    classified example, but increasing the margin of a correctly classified example does not make progress towards overall classification accuracy. (In contrast, the tendency of the exponential loss
    to place more importance on misclassified examples is key
    to AdaBoost's ability to boost.)  
    If we denote the components of $x_i$ by $x_{i,1},...,x_{i,d}$,
    since the  gradient of the unhinged loss is the
    same for all $v$, when it is minimized
    by coordinate descent starting from the zero weight
    vector all of its iterates will only have nonzero components on
    members of
    $\mathrm{argmax}_j \sum_i y_i x_{ij}$.  
    (If there is not a tie for the best weak learner, this will be
    a single component.)  
    From a boosting point of view,
    a boosting algorithm based on the unhinged loss allows
    a weak learner to keep returning the same (weak)
    hypothesis.
    
    \section{Discussion}
    
    While the unhinged loss is noise-tolerant in a sense, minimizing it can fail to find an accurate classifier on data that is linearly separable with a large margin.  On the other hand, minimizing the unhinged loss has been found to yield reasonable accuracy on natural data \citep[see][]{patrini2017making,charoenphakdee2019symmetric}.  This is not entirely
    unexpected, since, when learning linear models, minimizing the unhinged loss is closely related to performing Naive Bayes classification \citep{domingos1997optimality,NJ01}, using a spherical Gaussian to model the class-conditional distributions.  
    
    Given our results, one natural goal for future work is to study whether there are conditions on potential functions which achieve an attractive tradeoff between noise-robustness and usefulness for learning (in the sense that minimizing the potential function yields an accurate classifier).  
    Tools developed for studying
    Fisher consistency \citep{fisher1922mathematical},
    consistent loss functions \citep{zhang2004statistical},
    classification calibration \citep{bartlett2006convexity}
    and $H$-consistency \citep{long2013consistency} may be
    useful for this.  In particular, it would be interesting to investigate symmetric potential functions (see e.g. \citet{CLS19,GMS15}) and the multiclass setting (see e.g. \citep{ghosh2017robust,ZM18}) in light of this question.

    \appendix
    
    \section{A minimizer exists for noisy data}
    \label{a:minimum}
    
    In this appendix we show that 
    for all convex potential functions $\phi$, all finite-covariance distributions $\cD$ over $\R^d$,
    and all $\eta \in (0, 1/2)$, the function $P_{\phi,\ocD_{\eta}}$
    has a minimum.  
    (Recall from (\ref{eq:gpf}) that
     $P_{\phi, \ocD_{\eta}}(v) := \Ex_{(\bx,\by) \sim \ocD_{\eta}}\left[\phi(\by(v\cdot \bx))\right].$)

    We recall some useful background.
    
    \begin{definition}
    For any $a$, the set $\{ x : f(x) \leq a \}$
    is a {\em level set} for $f: \R^d  \to \R$.
    \end{definition}
    
    \begin{definition}
    A nonzero vector $u \in \R^d$ is a {\em direction of
    recession} for a function $f$
    if, for all
    nonempty level sets $L$ of $f$, there exists some
    $x_0$ such that $x_0 + \lambda u \in L$ for all
    $\lambda \geq 0$.
    (Informally, all non-empty level sets of $f$
    extend infinitely in the $u$ direction.)
    \end{definition}
    
    \begin{lemma}[\citep{rockafellar2015convex}, Theorem 27.1]
    \label{l:recession}
    The set of minima
    of a 
    continuous convex function $f$ is nonempty and bounded
    iff $f$ does not
    have any direction of recession.
    \end{lemma}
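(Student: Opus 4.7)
The plan is to prove the biconditional by handling each direction separately, localizing the convex-analytic heart of the argument in the backward implication. For the easy forward direction, suppose $S := \{x : f(x) = \min f\}$ is nonempty and bounded. Then $S$ is itself a nonempty level set of $f$ (it equals $\{x : f(x) \leq m\}$ for $m := \min f$). If $u$ were a direction of recession, applying the definition to this $L = S$ would produce some $x_0 \in S$ such that $x_0 + \lambda u \in S$ for all $\lambda \geq 0$, giving an entire ray inside $S$ and contradicting boundedness.

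For the backward direction, assume $f$ has no direction of recession and write $m := \inf_{x \in \R^d} f(x)$ (a priori possibly $-\infty$). I would take a minimizing sequence $\{x_n\}$ with $f(x_n) \to m$ and first argue that $\{x_n\}$ is bounded; then a convergent subsequence together with continuity of $f$ produces an actual minimizer $x^\star$ with $f(x^\star) = m \in \R$, giving nonemptiness of $S$. Suppose for contradiction that $\|x_n\| \to \infty$; fix any $y_0 \in \R^d$ and set $u_n := (x_n - y_0)/\|x_n - y_0\|$, extracting a subsequence with $u_n \to u$, $\|u\| = 1$. For every fixed $\lambda > 0$ and all sufficiently large $n$, the point $y_0 + \lambda u_n$ lies on the segment $[y_0, x_n]$, so convexity gives $f(y_0 + \lambda u_n) \leq \max\{f(y_0), f(x_n)\}$. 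Passing to the limit along the subsequence and using continuity of $f$ yields $f(y_0 + \lambda u) \leq \max\{f(y_0), a\}$ for any $a > m$; hence the ray $\{y_0 + \lambda u : \lambda \geq 0\}$ sits inside a single nonempty level set of $f$.

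The step I expect to be the main obstacle is upgrading this single ray to a genuine direction of recession in the sense of the definition, which requires that \emph{every} nonempty level set contain a ray in direction $u$ based at some of its points. I would obtain this base-point independence from the standard slope-monotonicity property of one-dimensional convex functions: for any $y$, the map $\lambda \mapsto f(y + \lambda u)$ is convex, so the difference quotient $(f(y + \lambda u) - f(y))/\lambda$ is nondecreasing in $\lambda > 0$ and has a limit $s(y) \in [-\infty, +\infty]$ as $\lambda \to \infty$. The bound at the base point $y_0$ forces $s(y_0) \leq 0$, and a short convexity comparison between $y_0 + \lambda u$ and $y + \lambda u$ (or, equivalently, invoking the recession-function machinery of Rockafellar directly) shows that $s(y)$ does not depend on $y$, so $s(y) \leq 0$ for all $y$. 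Consequently $f(y + \lambda u) \leq f(y)$ for every $y \in \R^d$ and every $\lambda \geq 0$, which means $u$ is a direction of recession, contradicting our assumption and establishing that $\{x_n\}$ is bounded.

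Finally, boundedness of $S$ follows by repeating the same ray construction: if $\{z_n\} \subseteq S$ were unbounded, translating so that a chosen base point sits at the origin, the normalized sequence $z_n/\|z_n\|$ has a limit direction $u$ along which the same slope-monotonicity argument produces a direction of recession for $f$, again contradicting our hypothesis. Thus $S$ is both nonempty and bounded, completing the proof.
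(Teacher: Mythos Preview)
The paper does not prove this lemma at all: it is quoted verbatim as Theorem~27.1 of \citet{rockafellar2015convex} and used as a black box in the proof of Proposition~\ref{p:minimum}. So there is no ``paper's own proof'' to compare against.

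That said, your argument is a correct and standard proof of this classical fact. The forward direction is immediate, and for the backward direction your strategy---extract a unit direction from an unbounded minimizing sequence, use convexity on segments plus continuity to bound $f$ along the limiting ray from one base point, and then invoke base-point independence of the recession function $s(y)=\lim_{\lambda\to\infty}(f(y+\lambda u)-f(y))/\lambda$ to promote this to a genuine direction of recession---is essentially Rockafellar's own route (Theorems~8.4--8.7 and~27.1). Two minor points worth tightening in a final write-up: (i) you should pass to a subsequence with $\|x_n\|\to\infty$ rather than assume it outright, and (ii) the case $m=-\infty$ is handled implicitly (boundedness of $\{x_n\}$ plus continuity of a real-valued $f$ forces $m\in\R$), but it is worth saying so explicitly.
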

    
    \begin{definition}
    Say that a probability distribution
    $\cD$ over $\R^d \times \{-1, 1\}$ has {\em finite covariance} if,
    for all unit length $u \in \R^d$,
    $\E_{(\bx,\by)\sim \cD}[( u \cdot \bx)^2]$ exists.
    \end{definition}
    
    Now we are ready to analyze $P_{\phi,\ocD_{\eta}}$.
    \begin{proposition}
    \label{p:minimum}
    For any convex potential function $\phi$,
    for any $\eta \in (0,1/2)$, for any finite-covariance distribution
    $\cD$ over $\R^d$, $P_{\phi,\ocD_{\eta}}$ has a minimum.
    \end{proposition}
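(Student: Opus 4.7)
The plan is to apply \Cref{l:recession}: $P_{\phi,\ocD_\eta}$ is convex as an expectation of convex functions of $v$, so the task reduces to ruling out directions of recession. Before diving in, I would reduce to a nondegenerate subspace: with $V := \{u \in \R^d : u \cdot \bx = 0 \text{ almost surely under } \cD\}$, the functional $P_{\phi,\ocD_\eta}$ is invariant under translation by any $u \in V$, so it suffices to minimize its restriction to $V^\perp$ and then translate by an arbitrary element of $V$ to recover a minimizer on all of $\R^d$.

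The heart of the proof is to show, for any nonzero $u \in V^\perp$, that $P_{\phi,\ocD_\eta}(\lambda u)$ grows at least linearly in $\lambda$, which forces the recession function along $u$ to be strictly positive. The key observation is that $\eta$-RCN always places probability at least $\eta$ on the ``wrong'' label no matter the underlying $\bx$. Since $u \notin V$, there exists $c>0$ with $q := \Pr_\cD[|u \cdot \bx| \geq c] > 0$; conditional on $|u \cdot \bx| \geq c$, the noisy label in $\ocD_\eta$ produces the event $E := \{\by(u \cdot \bx) \leq -c\}$ with conditional probability at least $\eta$, so $\Pr_{\ocD_\eta}[E] \geq \eta q > 0$.

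The growth estimate then follows using only the convexity bound $\phi(z) \geq \phi(0) + \phi'(0)z$ together with $\phi \geq 0$: restricting the expectation defining $P_{\phi,\ocD_\eta}(\lambda u)$ to the event $\{\by u \cdot \bx < 0\}$ and applying the subgradient inequality yields
\[
P_{\phi,\ocD_\eta}(\lambda u) \,\geq\, \phi(0)\,\Pr[\by u \cdot \bx < 0] \;+\; (-\phi'(0))\,\lambda\, \E\!\left[|u \cdot \bx|\,\mathbf{1}\{\by u \cdot \bx < 0\}\right],
\]
where the expectation is at least $c\,\Pr_{\ocD_\eta}[E] \geq c\eta q > 0$. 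Since $-\phi'(0)>0$, letting $\lambda \to \infty$ shows the recession function of $P_{\phi,\ocD_\eta}$ along $u$ is strictly positive, and because the recession function of a convex function is independent of the base point, $u$ is not a direction of recession. Invoking \Cref{l:recession} on $V^\perp$ then delivers a (bounded) set of minimizers, which we lift back to $\R^d$ by adding an arbitrary element of $V$.

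The main obstacle I foresee is technical rather than conceptual: for convex potentials with superlinear growth at $-\infty$ (for instance the exponential loss) combined with heavy-tailed $\cD$, $P_{\phi,\ocD_\eta}$ may take the value $+\infty$ on part of $\R^d$, while \Cref{l:recession} is stated for continuous convex functions. Handling this cleanly may require a proper-lsc-convex version of the recession theorem applied on the effective domain of $P_{\phi,\ocD_\eta}$. The finite-covariance hypothesis enters exactly here: it guarantees both that $P_{\phi,\ocD_\eta}(0) = \phi(0) < \infty$, so the effective domain is nonempty, and that $\E[|u \cdot \bx|] < \infty$ by Cauchy--Schwarz, so the linear lower bound above is finite and informative.
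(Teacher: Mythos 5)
Your proposal is correct and follows essentially the same route as the paper's own proof: both reduce to ruling out directions of recession via \Cref{l:recession}, both dispose of degenerate directions (those $u$ with $u \cdot \bx = 0$ almost surely) in the same way, and both derive linear growth of $P_{\phi,\ocD_{\eta}}$ along any remaining direction from the tangent-line bound $\phi(z) \geq \phi(0) + \phi'(0)z$ combined with $\phi \geq 0$ and the fact that RCN places probability at least $\eta$ on the label opposing $u$ --- the paper implements this pointwise, via $\eta\,\phi(-\by z) + (1-\eta)\,\phi(\by z) \geq \eta\,\phi(-|z|)$, and handles an arbitrary base point $x_0$ with the triangle inequality rather than by citing base-point independence of the recession function, but these are cosmetic differences. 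Your closing caveat --- that $P_{\phi,\ocD_{\eta}}$ may take the value $+\infty$ under a finite-covariance $\cD$ (e.g.\ the exponential loss with a heavy-tailed $\cD$), while \Cref{l:recession} is stated for continuous convex functions --- identifies a genuine subtlety that the paper's proof glosses over as well, and it is resolved exactly as you suggest: $P_{\phi,\ocD_{\eta}}$ is proper and lower semicontinuous (by Fatou's lemma, using $\phi \geq 0$), and Rockafellar's Theorem~27.1 applies to closed proper convex functions.
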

    \begin{proof}
    First, we may assume without loss of generality
    that, for all unit length $u \in \R^d$,
    \begin{equation}
    \label{e:width}
    \Ex_{(\bx,\by)\sim \cD}[| u \cdot \bx|] > 0
    \end{equation}
    since otherwise $u \cdot \bx=0$ almost surely, and $P_{\phi,\ocD_{\eta}}(v)$
    is unaffected by projecting $v$ onto
    the subspace of $\R^d$ orthogonal to $u$.
    
    Assume for contradiction that some $u$ is a direction
    of recession for $P_{\phi,\ocD_{\eta}}$.  
    For any $x_0 \in \R^d$ and $\lambda \geq 0$, we have
    \begin{align*}
        &P_{\phi,\ocD_{\eta}}(x_0 + \lambda u) \\
        &= \Ex_{(\bx,\by) \sim \ocD_{\eta}} [\phi(\by ((x_0 + \lambda u) \cdot \bx))] \tag{def.~of $P_{\phi,\ocD_{\eta}}$}\\
    & = \Ex_{(\bx,\by) \sim \cD} [ \eta \phi(-\by ((x_0 + \lambda u) \cdot \bx))
                  + (1 - \eta) \phi(\by ((x_0 + \lambda u) \cdot \bx))
                  ] \tag{def.~of $\overline{{\cal D}}_\eta$}\\
    & \geq \Ex_{(\bx,\by) \sim \cD} [ \eta \max_{\ty \in \{-1, 1\}} \phi(\ty ((x_0 + \lambda u) \cdot \bx))
                  ] \tag{since $\eta<1/2$ and $\phi \geq 0$} \\
    & = \Ex_{(\bx,\by) \sim \cD} [ \eta \phi(-| (x_0 + \lambda u) \cdot \bx|)
                  ] \tag{since $\phi$ is nonincreasing} \\
    & \geq \eta \Ex_{(\bx,\by) \sim \cD}  [  \phi(0) - \phi'(0) \cdot| (x_0 + \lambda u) \cdot \bx |
                  ] \tag{since $\phi$ is convex and $\phi'(0) <0$} \\
    & \geq 
     \eta \left( \phi(0) - \phi'(0) \lambda 
       \Ex_{(\bx,\by) \sim \cD}  [ |  u \cdot \bx | ]
         +  \phi'(0)
          \Ex_{(\bx,\by)}[| x_0 \cdot \bx |
                  ] \right) \tag{triangle inequality, $\phi'(0) < 0$}.
    \end{align*}
    Thus $\lim_{\lambda \rightarrow \infty} P_{\phi,\ocD_{\eta}}(x_0 + \lambda u) = \infty$, which contradicts the assumption that $u$ is a direction of
    recession for $P_{\phi,\ocD_{\eta}}$.
    \end{proof}

  \bibliographystyle{plainnat}

    \bibliography{allrefs}

    \end{document}